\newcommand{\CTB}{CTB} % This is the acronym for our CTB algorithm
\newcommand{\CTBone}{CTB$-1$} % This is the acronym for our Bayes CTB algorithm
\newcommand{\CTBtwo}{CTB$-2$} % This is the acronym for our Bayes CTB algorithm
\newcommand{\CTBthree}{CTB$-3$}
\newcommand{\argmax}{\mathop{\mathrm{argmax}}}
\newcommand{\mathbbm}[1]{\text{\usefont{U}{bbm}{m}{n}#1}} % from mathbbm.sty
\theoremstyle{definition}
\newtheorem{definition}{Definition}[section]
\newtheorem{theorem}{Theorem}
\newtheorem{lemma}[theorem]{Lemma}
\begin{document}

\title{Dueling Bandits with Dependent Arms}

\author{\name Bangrui Chen \email bc496@cornell.edu \\
       \name Peter I. Frazier \email pf98@cornell.edu \\
       \addr School of Operations Research and Information Engineering\\
       296 Rhodes Hall, Cornell University\\
       Ithaca, NY 14853, USA}

%\editor{}

\maketitle

\begin{abstract}
We study dueling bandits with weak utility-based regret when preferences over arms have a total order and carry observable feature vectors. The order is assumed to be determined by these feature vectors, an unknown preference vector, and a known utility function.  This structure introduces dependence between preferences for pairs of arms, and allows learning about the preference over one pair of arms from the preference over another pair of arms. 
We propose an algorithm for this setting called \textit{Comparing The Best} (\CTB), which we show has constant expected cumulative weak utility-based regret. We provide a Bayesian interpretation for \CTB, an implementation appropriate for a small number of arms, and an alternate implementation for many arms that can be used when the input parameters satisfy a decomposability condition. We demonstrate through numerical experiments that \CTB\ with appropriate input parameters outperforms all benchmarks considered.
\end{abstract}

\section{Introduction}
In the dueling bandits problem, we are faced with a collection of arms, and pull a pairs of arms while observing noisy binary feedback indicating which arm is better for each pulled pair.  As in the classical multi-armed bandit problem, we wish to pull arms to quickly learn which arm is best and minimize the number of pulls to suboptimal arms.

Dueling bandits were introduced by \cite{yue2009interactively}, motivated by interactive optimization of web search and other information retrieval systems. The advantage of the dueling bandits formulation over the classical multi-armed bandits formulation in this application setting is that pairwise comparison results can be reliably inferred from implicit feedback, for example through interleaved rankings in \cite{radlinski2008does}, in contrast with cardinal evaluation obtained from explicit feedback, which is typically difficult to obtain, biased, and requires careful calibration \citep{joachims2007evaluating, yue2012k}.

Dueling bandits have been studied most frequently assuming strong regret, in which the regret is 0 if and only if both pulled arms are optimal. Several algorithms have been devised that assume the existence of a Condorcet winner, i.e., one that is preferred in comparison with each other arm.  Algorithms with order-optimal strong regret, $O(N\log(T))$, in this setting include BTM \citep{yue2011beat}, RUCB \citep{zoghi2014relative} and RMED \citep{komiyama2015regret}. \cite{zoghi2015copeland} points out points out that a Condorcet winner does not necessarily exist, and that its probability of existence decreases dramatically with the number of arms. That work instead studies the dueling bandits assuming a Copeland winner, which is guaranteed to exist, and propose two algorithms, CCB and SCB, which achieve $O(N\log(T))$ strong regret in this more general setting.  

The above papers on strong regret bound the binary strong regret, in which the regret is $1$ whenever it is strictly positive. \cite{ailon2014reducing} considered strong utility-based regret, in which each arm has a utility score from which preferences are derived, and the regret for failing to pull the maximum utility arm twice is a function of that maximal arm's utility and the utilities of the pulled arms. 

Bandits have also been considered, though less frequently, in the weak regret setting, introduced by \cite{yue2012k}, in which regret is $0$ if either of the pulled arms is optimal.  This setting is more appropriate for recommender systems, in which we offer the user a pair of items, and she selects the one that is preferred.  $0$ regret is incurred as long as the best item is made available.  While \cite{yue2012k} introduced weak regret, an algorithm with regret bounds first appeared in \cite{chen2017dueling}, which proposed the \textit{Winner Stays} (WS) algorithm that achieves $O(N\log(N))$ cumulative binary weak regret when arms have a total order and $O(N^2)$ in the Cordorcet winner setting.  These bounds on binary weak regret have corresponding bounds on utility-based weak regret inflated by the difference in utility between the best and worst arms.

% recommender systems such as \textit{UberEat}.  Weak regret in dueling bandits was first proposed in \citep{yue2012k} but it was not as explored as the strong regret. Based on our best knowledge, \citet{chen2017dueling} is the first paper studies dueling bandit with weak regret. \citet{chen2017dueling} proposes a simple algorithm called \textit{Winner Stays} (WS), which assigns a score to each of the arm and pulls the arm which has the top two largest score at each time. They prove that WS can achieve $O(N\log(N))$ cumulative weak regret under the total order assumption and $O(N^2)$ cumulative weak regret in the Condorcet winner setting. However, WS completely ignores the dependency among arms, which could help us learn the user's preference faster. In this paper, our proposed strategy takes this dependency into consideration and achieves better result.

We consider utility-based weak regret, in the total order setting, when the total order is induced by a utility which is in turn a function of observable arm features, an unknown latent preference vector, and a known utility function.  
This framework includes the commonly used logit or Bradley-Terry \citep{revelt1998mixed,yue2012k} and  probit models \citep{franses2002econometric}.  
We provide an algorithm, Comparing with the Best (CTB) that has expected cumulative utility-based weak regret that is constant in $T$, and that leverages the dependence between preferences over arms induced by the arm features and utility function to provide excellent empirical performance when prior information is available.
While our regret bound's dependence on $N$ is looser than \cite{chen2017dueling} (our dependence is $2^N$ in the worst case, and is $N^{2d}$ when the utility function is linear over a $d$-dimensional space of preferences and arm features), our algorithm is more flexible in its ability to problem structure induced by the feature vectors, and outperforms it empirically by a substantial margin when $N$ is small enough to allow computation that fully takes advantage of this problem structure.  

Our exploitation of arm features is similar in spirit to work in the traditional (cardinal) multi-armed bandit setting on linear bandits \citep{rusmevichientong2010linearly, abbasi2011improved}.

% The exploration vs exploitation trade off we addressed in this paper is studied more thoroughly in the multi-armed bandits setting \citep{bubeck2012regret, auer2002finite}. The closest related work in the multi-armed bandits setting with ours is the best arm identification problem \citep{audibert2010best,bubeck2013multiple}. However, in their settings, they are trying to identify the best arm within a fixed budget which is different from ours. 

% Our paper is also closely related to the linear bandits problem studied in \citep{rusmevichientong2010linearly}, in which each arm corresponds to a covariate and we could explore more efficiently based on these covariate.

The paper is structured as follows. In section~\ref{probForm}, we formulate our problem.
In section~\ref{Methods}, we introduce {\it Comparing The Best} (\CTB) which we show in section~\ref{results} has \CTB\ constant expected cumulative regret.  In section~\ref{Imple}, we discuss a efficient implementation method for a specific class of prior information. In section~\ref{sec:Bayes}, we provide a Bayesian interpretation for \CTB. In section~\ref{sec:exp}, we compare \CTB\ with three benchmarks using simulated datasets, in which \CTB\ outperforms all benchmarks considered.

\newcommand{\prefdim}{d'} % The dimension of the preference vector
\newcommand{\armdim}{d} % The dimension of the arm feature vector

\section{Problem Formulation}
\label{probForm}

There are $N\geq 2$ arms, and each arm $i$ has an observable and distinct $\armdim$-dimensional feature vector $A_{i}$.
Preferences between pairs of arms $i,j$ are described by fixed but unknown probabilities $p_{i,j}$, where $p_{i,j} = 1 - p_{j,i}$ and $p_{i,j} \ne 0.5$ when $i \ne j$.
We denote $p=\min_{i<j} \max(p_{i,j},p_{j,i})$. By construction, $p>0.5$.

At each time t, we pull two arms $X_{t,0}$ and $X_{t,1}$ (this act is called a ``duel'') and we observe feedback $Y_{t}\in \{0,1\}$ indicating the winning arm: $Y_{t}=0$ indicates arm $X_{t,0}$ won and $Y_{t}=1$ indicates arm $X_{t,1}$ won.
Conditioned on the arms pulled and the history (the arms pulled and the identity of the winner at times $t'<t$), $Y_t$ is equal to $0$ with probability $p_{i,j}$.

We suppose that the arms have a total order, i.e., that there exists an ordering of the arms such that $p_{i,j} > 0.5$ if and only if arm $i$ is before arm $j$ in this order.  
Moreover, we suppose this ordering is determined by a utility associated with each arm, $u(\theta, A_i)$, where $u$ is a known utility function and 
$\theta \in \mathbb{R}^{\prefdim}$ is an unknown preference vector.
In particular, $p_{i,j} > 0.5$ if and only if $u(\theta, A_{i})>u(\theta,A_{j})$. 
The assumption that the total order be determined by $u(\theta,A_i)$ is without loss of generality if we are willing to select $\prefdim$ to be sufficiently large and $u$ to allow sufficient flexibility, although one may also choose a smaller $\prefdim$ and a less flexible $u$ with the goal of obtaining smaller regret (described below) when these more restrictive modeling assumptions hold. We assume without loss of generality that the indices correspond to their ordering by utility, so $u(\theta, A_{1}) >  u(\theta, A_{2}) > \cdots > u(\theta,A_{N})$. 

Several commonly used discrete choice models fall within this framework. For example, our framework includes the logit or Bradley-Terry model \citep{revelt1998mixed,yue2012k}, in which $\prefdim=\armdim$, the utility function is $u(\theta,A_{i})=\theta\cdot A_{i}$ and 
$p_{i,j} =\frac{\exp(u(\theta,A_{i}))}{\exp(u(\theta,A_{i})+u(\theta,A_{j}))}$.
Our framework also includes the probit model \citep{franses2002econometric} in which $\prefdim = \armdim$ and the utility function is the inner product as with the logit model, but $p_{i,j}=\Phi(u(\theta,A_{i})-u(\theta,A_{j}))$ where $\Phi(\cdot)$ is the standard normal cdf.

We define the utility-based weak regret $r(t)$ (henceforce referred to simply as the regret) at time $t$ as $r(t)=u(\theta,A_{1})-\max\{u(\theta,A_{X_{t,0}}), u(\theta,A_{X_{t,1}})\}$, which is the difference in utility between the best arm overall and the best arm available to the user from those offered.  
The cumulative regret up to time $T$ is $R(T)=\sum_{t=1}^{T}r(t)$.
We measure the quality of an algorithm by its expected cumulative regret. 

We now develop an algorithm \CTB, and show it has constant expected cumulative regret.

\section{The \textit{Comparing The Best} (CTB) Algorithm}
\label{Methods}

In this section we propose an algorithm {\it Comparing The Best} (\CTB) for this problem setting. This algorithm is based on the idea of ``cells'', which correspond to possible orderings of the arms by utility. It maintains a score for each cell, either explicitly or implicitly, which it initializes using optional prior information, and updates with the results from each duel. 

We present a general version of \CTB\ in this section that admits any prior information and explicitly maintains a score for each cell. Because the number of cells is exponential in the number of arms, explicitly maintaining scores for each cell is computationally infeasible for large problems. Thus, after presenting our theoretical results for the general \CTB\ algorithm in section~\ref{results}, we present a computationally efficient implementation of our algorithm in section~\ref{Imple} that can be used when the prior information can be expressed in terms of an initial score for each pair of arms. Although we present our algorithm in a frequentist setting, we show in section~\ref{sec:Bayes} that the scores used for each cell correspond to a Bayesian posterior on the value of $\theta$, and \CTB\ has a natural Bayesian interpretation.

To define \CTB, we first define some terminology and notation: {\it winning spaces}, {\it cells}, a {\it score}, and the best arm corresponding to a cell.  We begin with winning spaces.

\begin{definition}
Each pair of arms $i,j$ defines a {\it winning space} $H_{i,j} := \{X\in \mathbb{R}^{d}: u(X,A_{i}) \geq u(X, A_{j})\}$. 
\end{definition}
When $\theta\in H_{i,j}$, arm $i$ is preferred over arm $j$. 
% The union of $H_{i,j}$ and $H_{j,i}$ is $\mathbb{R}^{d}$. 
We use the phrases ``arm $A_i$ wins over arm $A_j$ in a duel'', and ``winning space $H_{i,j}$ wins the duel'' interchangeably.
% When the utility function is linear, this winning space is a half space. 

Each pair of arm determines two winning spaces and all winning spaces partition the space $\mathbb{R}^{d}$ into cells, where each cell is an intersection of winning spaces. To define notation to support working with cells, we first define  $H_{i,j}(k)=H_{i,j}$ when $k=0$ and $H_{i,j}(k)=H_{j,i}$ when $k=1$. For a binary vector $V$, we let $V[k]$ denote the $k^{th}$ element of $V$. Then, we have the following definition.

\begin{definition}
The {\it cell} C corresponding to a length $\frac{N(N-1)}{2}$ binary vector V is 
\begin{align*}
    C(V):=\cap_{i<j}H_{i,j}\left(V\left[\frac12 (2N-i)(i-1)+j-i\right]\right).
\end{align*}
\end{definition}

%In this definition, we use the $\left[\frac12 (2N-i)(i-1)}+j-i\right]^{th}$ element of $V$ to denote whether $C$ belongs to $H_{i,j}$ or $H_{j,i}$, for $i<j$. 

% \pfcomment{Can we change notation so that we don't need the new notation introduced by the below paragraph?  We can have $m_V$ instead of $\m{s}$, $J_V$ instead of $J_s$, and have $\mathcal{C}$ be the set of cells.  We can use $0$ to indicate the correct $V$ instead of writing out $[0,...,0]$, so that $m_1$ becomes $m_0$.
% If we need to sum over all of the non-empty cells, or the cells that have finite $m_V$, we can let $\mathcal{C}$ refer to this set.  Then $M = |\mathcal{C}|$.
% }

We assign binary vectors indexing cells, all of length $\frac{N(N-1)}{2}$, to integers lexicographically.
Let $V_{k}$ denote the $k^{th}$ such binary vector, let $M=2^N$ denote the number of cells, and let $C_i = C(V_i)$.
With this definition, 
$C_{1} = C(V_{1}) =C([0,0,\cdots,0])$ and thus $C_{1}=\cap_{i<j}H_{i,j}$ and $\theta \in C_{1}$. 
Some cells $C_i$ may be empty. We call these {\it empty cells}. 
Let $J_{k}=\{(i,j)|C_{k}\subseteq H_{i,j}\}$, which is the collection of indices of the winning spaces that contains $C_{k}$. 

Figure~\ref{illustration} illustrates winning spaces and cells.
% \pfcomment{If you remake this figure, Bangrui, let's avoid having the letters and numbers touch the lines.  Also, change the lines so that $C_5$ fits --- it adds complexity to the visual presentation by having an arrow in there.}

\begin{figure}[!h]
    \centering
    \includegraphics[scale=0.5]{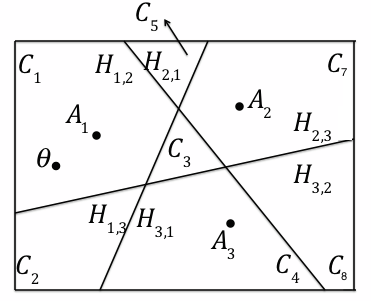}
    \caption{Illustration of winning spaces and cells. The index of the cell and its corresponding binary vectors are: $C_{1}$ and $(0,0,0)$; $C_{2}$ and $(0,0,1)$; $C_{3}$ and $(0,1,0)$; $C_{4}$ and $(0,1,1)$; $C_{5}$ and $(1,0,0)$; $C_{7}$ and $(1,1,0)$; $C_8$ and $(1,1,1)$. In this case, cell $C_6$ is an empty cell since the intersection of $H_{2,1}$, $H_{1,3}$ and $H_{3,2}$ is empty.}
    \label{illustration}
\end{figure}

\newcommand{\m}[1]{m_{#1}(0)}
\newcommand{\mi}{\m{i}}

We define a score $m_i(t)$ associated with each cell $C_i$ at time $t$.  Later in section~\ref{sec:Bayes} we will interpret this score as a monotone transformation of the posterior probability that $\theta$ is in this cell.  This score will be initialized to some value $\mi$, discussed below, and then will be incremented each time a winning space containing $C_i$ wins a duel.  That is,

\begin{align}
m_{i}(t)&=\mi +\sum_{k=1}^{t}\mathbbm{1}\{C_{i}\subseteq H_{X_{k,1},X_{k,2}}(Y_k)\}.
\label{update}
\end{align}

Each cell $C_{i}$ assigns a preference order to the arms.  Let $B(i)$ be the arm that would be best if $\theta$ were in $C_i$. More formally, $B(i)$ is the unique $j$ such that $C_{i}\subseteq H_{j,k}$, $\forall k\neq j$. Since $\theta \in C_{1}$, we know $B(1)=1$.

With this notation, we now define the {\it Comparing The Best} (\CTB) algorithm in Algorithm~\ref{algo1}.  \CTB\ pulls the arm that is best according to the cell with the highest score $m_i(t)$, and the arm that is best according to the cell with the highest score among those that have different best arm from the first arm chosen.
If we interpret $m_i(t)$ as being a monotone transformation of the posterior probability that $\theta \in C_i$, then we are selecting arms by selecting two cells that have different best arms, and are together most likely to contain $\theta$.

\begin{algorithm}[H]
 \For{t $\leq$ T}{
  Step 1: Pick $X_{t,0}=B(\argmax_{i}m_{i}(t))$, breaking ties arbitrarily\\
  Step 2: Pick $X_{t,1}=B\left(\argmax_{i:B(i)\neq X_{t,0}}m_{i}(t)\right)$, breaking ties arbitrarily\\
  Step 3: Observe the noisy feedback $Y_{t}$ and update $m_{i}(t)$ using Equation~\eqref{update}\\
  Step 4: t=t+1
 }
 \caption{{\it Comparing The Best} (\CTB)}
 \label{algo1}
\end{algorithm}

\paragraph{Choice of $\mi$:}
Here we offer guidance on the choice of $\mi$, which is left general in the description of \CTB\ to allow the user the flexibility to influence the arms pulled with prior information about the value of $\theta$, and to trade off regret against CTB's computational performance. In doing so, there are four considerations:

First, by setting $\mi$ larger for those cells that the user believes are more likely to contain $\theta$, the user encourages \CTB\ to select those cells more often.  If the user correctly sets $\mi$ larger for the cell that contains $\theta$, this tends to pull the best arm more often and decrease regret. We show in section~\ref{sec:Bayes} that $\mi$ can be interpreted in terms of the prior probability that $\theta \in C_i$, and one can leverage this relationship to convert prior information on $\theta$ into values for $\mi$. 

Second, by setting $\mi$ to be $-\infty$ for those cells that user is certain do not contain $\theta$, she can lead \CTB\ to never select those cells.  One may safely do this for empty cells, in which model assumptions imply $\theta$ cannot reside.  Doing this for other cells is dangerous, as setting cell $\m{1}$ to $-\infty$ can cause \CTB\ to have linear regret.

Third, in the absence of prior information, one may simply set $\mi=0$ for all cells that may contain $\theta$. We show in the next section show that as long as $\m{1}>-\infty$, the expected cumulative regret is finite.

Fourth, there is a computational aspect to setting $\mi$.  We show below in section~\ref{Imple} that if each $\mi$ can be written as a sum across pairs of arms of a score associated with each pair, then we can implement \CTB\ in a computationally efficient manner that scales to many arms.  In contrast, if one sets $\mi$ without enforcing structure, the computation required to implement Algorithm~\ref{algo1} grows exponentially with the number of arms.

% By setting $\mi$ to be larger for some cells and smaller for others, one can encourage \CTB\ to pull more often those arms believed to be the best by cells with larger $\mi$.  If one has prior information that $\theta$ is more likely to be in a particular cell, it may reduce regret to 

With these considerations in mind, we propose 3 specific ways to set $\mi$, and evaluate them in numerical experiments:
\begin{itemize}
\item For situations with loose computational requirements or few arms, and no prior information, we recommend setting $m_i = 0$ for all non-empty cells and $m_i=-\infty$ for all empty cells.  We call this \CTBone.
\item For situations with strict computational requirements and no prior information, we recommend setting $m_i = 0$ for all cells.  Then \CTB\ can be implemented using the efficient method described in section~\ref{Imple}.  We call this \CTBtwo.
\item For situations with loose computational requirements or few arms, and strong prior information, we recommend setting $m_i$ from the prior according to the method described in section~\ref{sec:Bayes}.  We call this \CTBthree. 
\end{itemize}

\section{Theoretical Results}
\label{results}
In this section, we prove the expected cumulative regret of \CTB\ is bounded by a constant. 
The main idea behind our proof is to show that for each cell $C_{i}$ with $B(i)\neq 1$, $\mathbb{E}[\sum_{t=0}^{\infty}1\{m_{i}(t)\geq m_{1}(t)\}]$ is bounded by a constant. We show this in turn by relating $m_{1}(t)-m_{i}(t)$ to a random walk with a larger probability of increasing than of decreasing. The following lemma, whose proof is in the supplement, allows us to bound the number of times this stochastic process takes values less a constant.

\begin{lemma}
\label{basic}
Let $p \in (0.5,1]$.
Suppose $Z(t)$ is a stochastic process with filtration $\mathcal{F}_t$, $Z(0)=0$  
and $P(Z(t+1)=Z(t)+1|\mathcal{F}_t)\geq p$,
then we have $\mathbb{E}\left[\sum_{t=0}^{\infty} \mathbbm{1} \{ Z(t)\leq S\}\right] \leq \frac{p+S(2p-1)}{(2p-1)^{2}}$ for $S\in \mathbb{N}$.
\end{lemma}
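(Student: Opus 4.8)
The plan is to reduce $Z$ to an ordinary asymmetric simple random walk by a pathwise coupling, and then to evaluate the resulting expected occupation time exactly using first-passage probabilities; the constant in the lemma should then come out with no slack.

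\textbf{Step 1: reduction to a simple random walk.} Let $q:=1-p$ and let $W$ be the simple random walk with $W(0)=0$ that moves up by one with probability $p$ and down by one with probability $q$, independently at each step. I would construct $W$ on the same probability space as $Z$ (enlarging it if necessary, via the standard representation of an adapted sequence through i.i.d.\ uniforms $U_1,U_2,\dots$) so that $Z$ takes an up-step whenever $U_{t+1}\le P(Z(t+1)=Z(t)+1\mid\mathcal F_t)$ and $W$ takes an up-step whenever $U_{t+1}\le p$. Using $P(Z(t+1)=Z(t)+1\mid\mathcal F_t)\ge p$ together with the fact that $Z$ never drops by more than one per step — implicit in the statement, and valid in the application where $Z(t)=m_1(t)-m_i(t)$ and each duel increments $m_1$ and $m_i$ by at most one — a one-line induction gives $Z(t)\ge W(t)$ for all $t$ almost surely. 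Hence $\mathbbm{1}\{Z(t)\le S\}\le\mathbbm{1}\{W(t)\le S\}$ pointwise, and by monotone convergence
\begin{align*}
\mathbb E\!\left[\sum_{t=0}^\infty \mathbbm{1}\{Z(t)\le S\}\right]\ \le\ \sum_{t=0}^\infty P\big(W(t)\le S\big)\ =\ \sum_{y=-\infty}^{S} G(y),
\end{align*}
where $G(y)$ is the expected number of visits of $W$ to state $y$.

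\textbf{Step 2: occupation times of $W$.} Since $p>\tfrac12$, $W\to+\infty$ almost surely, so $W$ is transient, every $G(y)$ is finite, and from $0$ the walk reaches each state $y\ge 0$ with probability one; the probability it ever reaches a state $y<0$ from $0$ is $(q/p)^{|y|}$, by the classical gambler's-ruin computation (e.g.\ optional stopping of the martingale $(q/p)^{W(t)}$). A one-step decomposition shows the probability that $W$ ever returns to a given state is $p\cdot(q/p)+q\cdot 1=2q$, so the expected number of visits to a state, counted from the first time it is reached, is $1/(1-2q)=1/(2p-1)$. Therefore $G(y)=\tfrac{1}{2p-1}$ for $0\le y\le S$ and $G(y)=\tfrac{(q/p)^{|y|}}{2p-1}$ for $y<0$, and combining the $S+1$ equal terms with a geometric tail,
\begin{align*}
\sum_{y=-\infty}^{S} G(y)=\frac{S+1}{2p-1}+\frac{1}{2p-1}\sum_{k\ge 1}\Big(\tfrac qp\Big)^k=\frac{S+1}{2p-1}+\frac{q}{(2p-1)^2}=\frac{p+S(2p-1)}{(2p-1)^2},
\end{align*}
which is exactly the claimed bound. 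The degenerate case $p=1$ needs no argument, since then $Z(t)=t$ and the left-hand side equals $S+1$.

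\textbf{Main obstacle.} The computations in Step 2 are textbook. The step requiring care is the coupling in Step 1 — formalizing the representation of $Z$'s increments and pinning down the implicit hypothesis that $Z$ has no downward jump larger than one, without which the stated bound fails. Once the domination $Z(t)\ge W(t)$ is in place, the rest is the exact Green's-function identity for the asymmetric walk, which is why the lemma's constant is tight.
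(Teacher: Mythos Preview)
Your proposal is correct and follows the same two-stage architecture as the paper: first dominate $Z$ by an asymmetric simple random walk $W$, then compute the expected occupation time of $\{y\le S\}$ for $W$ exactly. The execution differs in both stages, however. For the reduction, the paper simply asserts the marginal inequality $P(W(t)\le S)\ge P(Z(t)\le S)$ via the binomial formula, whereas you build an explicit pathwise coupling through shared uniforms; your version is more careful and, as you note, surfaces the implicit hypothesis that $Z$ has $\pm 1$ increments, which the paper's statement leaves tacit. For the occupation-time computation, the paper argues by first-step analysis---computing the expected return time $A=1/(2p-1)$ via optional stopping and the return probability $B=(1-p)/p$, solving for the $S=0$ case, and then inducting on $S$---while you go directly through the Green's function, using the gambler's-ruin hitting probability $(q/p)^{|y|}$ and the return probability $2q$ to write $G(y)$ in closed form and sum the geometric tail. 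Your route is shorter and avoids the induction; the paper's route is slightly more elementary in that it never names the Green's function, but the recursion in $S$ is a bit clumsy by comparison. Both land on the same constant with equality for $W$, confirming the bound is tight.
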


We now proceed with the larger proof by defining
\begin{align}
q_{i,j}(t)&=\sum_{k=1}^{t}\mathbbm{1}\{X_{k,0}=i, X_{k,1}=j, Y_{k}=0\} +\sum_{k=1}^{t}\mathbbm{1}\{X_{k,0}=j,X_{k,1}=i,Y_{k}=1\}, \label{update2}
\end{align}
which is the number of times up to time $t$ that arm $i$ beats arm $j$ in a duel. 
Then we can rewrite $m_{i}(t)$ in terms of $q_{i,j}(t)$ as,
\begin{align}
m_{k}(t)=\m{k}+\sum_{(i,j)\in J_{k}}q_{i,j}(t).
\label{reconstruct}
\end{align}

The definition of $C_{1}$ implies $J_{1}=\{(i,j),\forall i<j\}$ and $m_{1}(t) = \m{1}+\sum_{i<j}q_{i,j}(t)$. Let $N_{i,j}(t) = q_{i,j}(t) + q_{j,i}(t)$ denote the number of times we have pulled arms $i$ and $j$. 
The next lemma shows $\mathbb{E}[N_{i,j}(t)]$ is bounded by a constant for $1<i<j$.

\begin{lemma}
\label{mainLemma}
For $1<i<j$, if $\m{1} > -\infty$, we have $\mathbb{E}[N_{i,j}(t)]\leq M'\frac{p-\Delta(2p-1)}{(2p-1)^{2}}$, where $M'$ is the number of cells $i$ with $\m{i} > -\infty$, and $\Delta=\min_{s=1,\cdots M}\{\m{1}-\m{s}\}\leq 0$. 
\end{lemma}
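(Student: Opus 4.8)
The plan is to bound $N_{i,j}(t)\le N_{i,j}(\infty)$ by a charging argument. Every duel between two suboptimal arms (i.e., both $\ne 1$, which is the only kind that increments $N_{i,j}$ since $1<i<j$) will be charged to a cell $C_k$ with $B(k)\ne 1$ whose current score dominates $m_1$ \emph{and} on which $C_1$ and $C_k$ disagree about the pulled pair; then each such cell can absorb only a bounded expected number of charges, via a random-walk argument using Lemma~\ref{basic}, and there are at most $M'$ relevant cells.

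\emph{Step 1 (charging).} Suppose at time $s$ the pulled pair is $\{X_{s,0},X_{s,1}\}=\{a,b\}$ with $a,b\ne 1$. By Step~1 of Algorithm~\ref{algo1} the arg-max cell $C_k$ has $B(k)=a$ and $m_k(s)\ge m_1(s)$; by Step~2, and since $B(1)=1\ne a$ makes $C_1$ eligible there, the chosen cell $C_{k'}$ has $B(k')=b$ and $m_{k'}(s)\ge m_1(s)$. Since $B(k)=a$ we have $C_k\subseteq H_{a,b}$, and since $B(k')=b$ we have $C_{k'}\subseteq H_{b,a}$, so $C_k$ and $C_{k'}$ express opposite preferences on $\{a,b\}$; as $\theta\in C_1$ fixes one strict preference on $\{a,b\}$, exactly one of $C_k,C_{k'}$ disagrees with $C_1$ there — call it $C_{k^*}$. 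Then $B(k^*)\in\{a,b\}$ (so $B(k^*)\ne 1$), $m_{k^*}(s)\ge m_1(s)\ge\m{1}>-\infty$, which by \eqref{reconstruct} forces $\m{k^*}>-\infty$, and $C_1,C_{k^*}$ disagree on the pulled pair. Hence
\[
N_{i,j}(t)\ \le\ \sum_{k:\,B(k)\ne 1,\ \m{k}>-\infty}\ \sum_{s\ge 1}\mathbbm{1}\bigl\{m_k(s)\ge m_1(s)\ \text{and a pair on which }C_1,C_k\text{ disagree is pulled at }s\bigr\}.
\]

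\emph{Step 2 (random walk per cell).} Fix $C_k$ with $B(k)\ne 1$ and $\m{k}>-\infty$. Using \eqref{reconstruct} and $J_1=\{(i,j):i<j\}$, the $q_{i,j}(s)$ over pairs on which $C_1$ and $C_k$ agree cancel, leaving
\[
m_1(s)-m_k(s)=(\m{1}-\m{k})+D_k(s),\qquad D_k(s):=\sum_{\{i,j\}:\,C_1,C_k\text{ disagree},\ i<j}\bigl(q_{i,j}(s)-q_{j,i}(s)\bigr),
\]
where in each term $i<j$ is the true order, so $C_1\subseteq H_{i,j}$, $C_k\subseteq H_{j,i}$. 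Thus $m_k(s)\ge m_1(s)$ iff $D_k(s)\le\m{k}-\m{1}$, and since $\Delta=\min_s\{\m{1}-\m{s}\}\le\m{1}-\m{k}$ this implies $D_k(s)\le-\Delta$. Now $D_k$ is integer-valued, changes only at times when a disagreeing pair is pulled, and at such a step increases by $1$ exactly when the true-better arm of that pair wins, which conditioned on the history and on that pair being pulled has probability $p_{i,j}=\max(p_{i,j},p_{j,i})\ge p$. Enumerating by $\tau_1<\tau_2<\cdots$ the times a disagreeing pair is pulled and setting $Z(n)=D_k(\tau_n)$, $Z(0)=0$, the process $Z$ satisfies the hypothesis of Lemma~\ref{basic} with this $p$, and the inner sum in Step~1 is at most $\sum_{n\ge 0}\mathbbm{1}\{Z(n)\le-\Delta\}$. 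Lemma~\ref{basic} with $S=-\Delta$ (if $\Delta\notin\mathbb{Z}$, with $S=\lfloor-\Delta\rfloor$, which only tightens the bound) bounds this expectation by $\frac{p-\Delta(2p-1)}{(2p-1)^2}$. Summing over the at most $M'$ cells with $\m{k}>-\infty$ gives the claim.

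The main obstacle is Step~1: the obvious choice — charging the duel to the cell whose best arm is $X_{s,0}$, or to the globally highest-scoring cell — fails, because that cell may \emph{agree} with $C_1$ on the pulled pair, in which case the duel does not move the walk $D_k$ and the charges cannot be bounded. The fix is the two-cell case analysis above, which always produces a cell that both dominates $m_1$ and disagrees with $C_1$ on the pulled pair. The remaining care — that the $\tau_n$ are stopping times and the one-step increment bound for $Z$ holds conditionally at each of them — is routine, since conditioning on the full history and on which pair is pulled reduces each increment to a single Bernoulli trial with success probability $p_{i,j}\ge p$.
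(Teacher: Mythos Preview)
Your proof is correct and follows essentially the same strategy as the paper's: charge each suboptimal duel to a cell whose score dominates $m_1$ and which disagrees with $C_1$ on the pulled pair, then bound the charges per cell via the random-walk Lemma~\ref{basic} and sum over at most $M'$ cells. The paper avoids your two-cell case split by directly charging to a cell $C_s$ with $B(s)=j$ (the truly worse arm, $j>i$), which exists by the same eligibility-of-$C_1$ argument and automatically satisfies $C_s\subseteq H_{j,i}$ while $C_1\subseteq H_{i,j}$; this cell is in fact always your $C_{k^*}$, so the two arguments coincide.
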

\begin{proof}
Let $1<i<j$. 
Let $D_{i,j}(t)$ be an indicator function equal to $1$ if and only if we pull arms $i$ and $j$ at time $t$. 
Given that we pull arm $i$, we can only also pull arm $j$ when there is a cell $C_{s}$ under which $j$ is the best arm and for which $m_{s}(t) \ge m_{1}(t)$. 
Moreover, under the assumption that $\m{1}>-\infty$, $m_{s}(t) \ge m_{1}(t)$ is only possible if $\m{s}>-\infty$.
Thus, $D_{i,j}(t) = 1$ implies 
$\max_{s:B(s)=j,\m{s}>-\infty} m_{s}(t)\geq m_{1}(t)$.
Adopting the convention here and in the rest of the proof that maxima and sums over sets of cells are taken only over those cells with $\m{s} > -\infty$, we have 
\begin{align}
D_{i,j}(t)
&=  D_{i,j}(t)\cdot \mathbbm{1}\left\{\max_{s:B(s)=j}m_{s}(t)\geq m_{1}(t)\right\} \nonumber \\
&\leq D_{i,j}(t)\sum_{s:B(s)=j}\mathbbm{1}\{m_{s}(t)\geq  m_{1}(t)\} \nonumber \\
&= D_{i,j}(t)\sum_{s:B(s)=j}\mathbbm{1}\left\{\sum_{(i^{'},j^{'})\in J_{s}}q_{i^{'},j^{'}}(t)+\m{s}\geq  \sum_{(i^{'},j^{'})\in J_{1}}q_{i^{'},j^{'}}(t)+\m{1}\right\} \nonumber \\
&=D_{i,j}(t)\sum_{s:B(s)=j}\mathbbm{1}\left\{\sum_{(i^{'},j^{'})\in J_{s}\setminus J_{1}}q_{i^{'},j^{'}}(t)+\m{s}\geq  \sum_{(i^{'},j^{'})\in J_{1}\setminus J_{s}}q_{i^{'},j^{'}}(t)+\m{1}\right\} \nonumber \\
&=D_{i,j}(t)\sum_{s:B(s)=j}\mathbbm{1}\left\{\sum_{(i^{'},j^{'})\in J_{s}\setminus J_{1}} q_{i^{'},j^{'}}(t)-q_{j^{'},i^{'}}(t) \geq  \m{1}-\m{s}\right\} \nonumber \\
&\leq D_{i,j}(t)\sum_{s:B(s)=j}\mathbbm{1}\left\{\sum_{(i^{'},j^{'})\in J_{s}\setminus J_{1}} q_{i^{'},j^{'}}(t)-q_{j^{'},i^{'}}(t) \geq  \Delta\right\}, \nonumber
\end{align}
 %where the fourth equation holds because $J_{1}=\{(i^{'},j^{'}): i^{'}<j^{'}\}$ and thus $(i^{'},j^{'})\in J_{s}\setminus J_{1}\iff (j^{'},i^{'})\in J_{1}\setminus J_{s}$.  \pfcomment{Don't you also need something about $J_s$ for for the fourth equation to hold?  For example, if $J_s = \emptyset$ then what you say isn't true.  I think you need that if $i',j'$ is in $J_s$ then $j',i'$ is not in $J_s$.  That is also what you are using about $J_1$.}
where the fourth equation holds because $J_s$ has the property that $(i',j') \in J_s \iff (j',i') \notin J_s$, and similarly for $J_1$.  Thus,
$(i^{'},j^{'})\in J_{s}\setminus J_{1}
\iff i',j' \in J_s\text{ and } i',j' \notin J_1
\iff j',i' \notin J_s\text{ and } j',i' \in J_1
\iff (j^{'},i^{'})\in J_{1}\setminus J_{s}$.

Thus, we have
\begin{align}
&N_{i,j}(t) =\sum_{k=1}^{t}D_{i,j}(k) \leq
D_{i,j}(k)
\sum_{s:B(s)=j}\sum_{k=1}^{t}\mathbbm{1}\left\{\sum_{(i^{'},j^{'})\in J_{s}\setminus J_{1}} q_{i^{'},j^{'}}(k)-q_{j^{'},i^{'}}(k) \geq  \Delta \right\}.\nonumber  
\end{align}

Fix an $s$ with $B(s)=j$ and 
let $Z(k)=\sum_{(i^{'},j^{'})\in J_{s}\setminus J_{1}} q_{i^{'},j^{'}}(k)-q_{j^{'},i^{'}}(k)$, so that 
\begin{equation*}
N_{i,j}(t) \leq
\sum_{s:B(s)=j}
\sum_{k=1}^{t} D_{i,j}(k) \cdot \mathbbm{1}\left\{Z(k) \geq \Delta \right\}.
\end{equation*}

We observe that $Z(k)$ is like a random walk, except that changes in only some time periods.  We now describe the conditional distribution of $Z(k+1)$ given the history up to time $k$.  Later, we will refer to the $\sigma$-algebra generated by this history as $mathcal{H}_k$.
\begin{itemize}
    \item 
    If the arms $X_{k,0}$, $X_{k,1}$ that we pull satisfy 
    $(X_{k,0},X_{k,1})\in J_{s}\setminus J_{1}$,
    then $Z(k+1) \in \{Z(k)-1,Z(k)+1\}$ 
    and the conditional probability that 
    $Z(k+1)=Z(k)-1$ is $p_{X_{k,1},X_{k,0}}\geq p$.  
    This lower bound holds because $(X_{k,0},X_{k,1})\notin J_{1}$ implies  
  $X_{k,1} < X_{k,0}$.
    \item 
    Similarly, if 
    $(X_{k,1},X_{k,0})\in J_{s}\setminus J_{1}$,
    then $Z(k+1) \in \{Z(k)-1,Z(k)+1\}$ as before, 
    and the conditional probability that 
    $Z(k+1)=Z(k)-1$ is $p_{X_{k,0},X_{k,1}}\geq p$,   
   because    
  $(X_{k,1},X_{k,0})\notin J_{1}$ implies  
  $X_{k,0} < X_{k,1}$.
    \item Otherwise, if neither 
    $(X_{k,0},X_{k,1})$ 
    nor 
    $(X_{k,1},X_{k,0})$ is in $J_{s}\setminus J_{1}$,
    then $Z(k+1)=Z(k)$.
    \item The definition of $J_1$ prevents having both $(X_{k,0},X_{k,1})$ and  $(X_{k,1},X_{k,0})$  in $J_{s}\setminus J_{1}$.
\end{itemize}

When $D_{i,j}(k) = 1$, so that we pull arms $i$ and $j$ (either $X_{t,0}=i$ and $X_{t,1}=j$ or vice versa) we will be in one of the first two cases,
because $B(s)=j$ implies cell $s$ considers $j$ to be the best arm, and so $(j,i) \in J_s$, and $i<j$ implies $(j,i) \notin J_1$. Thus, $D_{i,j}(k) = 1$ implies $Z(k+1) \ne Z(k)$, and we have
\begin{equation*}
N_{i,j}(t) \leq
\sum_{s:B(s)=j}
\sum_{k=1}^{t} \mathbbm{1}\left\{Z(k+1) \ne Z(k), Z(k) \geq \Delta \right\}.
\end{equation*}

We will perform a random time change to study the dynamics over only those time periods where $Z(k)$ changes. Define $\tau_0 = 0$, $\tau_{m}=\min_{k}\{k>\tau_{m-1},Z(k)\neq Z(k+1)\}$.
Because the event $Z(k) \neq Z(k+1)$ is measurable given the history at time $k$, $\mathcal{H}_k$, as described in the dynamics of $Z(\cdot)$ above, each $\tau_m$ is a stopping time.
Define $\zeta=\inf\{m:\tau_{m}=\infty\}$,
which is the lifetime of the random change of time.
We have,
\begin{equation}
N_{i,j}(t) \leq
\sum_{s:B(s)=j}
\sum_{m=1}^{\zeta-1} \mathbbm{1}\left\{Z(\tau_m) \geq \Delta \right\}.
\label{eq:proof1}
\end{equation}

We let $W(m)=Z(\tau_m)$ for $m < \zeta$ (i.e., $m$ with $\tau_m < \infty$), and $W(m) = W(m-1) + \epsilon_m$ for $m\ge \zeta$, where $\epsilon_m$ are iid random variables taking value $-1$ with probability $p$ and value $1$ with probability $1-p$. 
Observe that $\zeta$ is measurable with respect to $\mathcal{H}_\infty$, so that the event $m < \zeta$ is measurable with respect to $\mathcal{H}_{\tau_m}$.  We define an augmented filtration, letting $\mathcal{F}_m$ to be the $\sigma$-algebra generated by 
$\mathcal{H}_{\tau_{\min(m,\zeta)}}$ and $(\epsilon_{m'} : m' \le m)$.
With this construction, $W(m+1) - W(m) \in \{-1,+1\}$ and 
$P\left(W(m+1) = W(m)-1 | \mathcal{F}_m \right) \ge p$.  
Thus, by Lemma~1, 
\begin{equation*}
\sum_{m=1}^{\zeta} \mathbbm{1}\left\{Z(\tau_m) \geq \Delta \right\}
= \sum_{m=1}^{\zeta} \mathbbm{1}\left\{W(m) \geq \Delta \right\}
\le \sum_{m=1}^{\infty} \mathbbm{1}\left\{W(m) \geq \Delta \right\}
\le \frac{p-\Delta(2p-1)}{(2p-1)^{2}}.
\end{equation*}

Combining this with \eqref{eq:proof1} and using the fact that the number of cells with $\m{s} > -\infty$, $M'$, bounds the sum over $s$, we obtain our result.\qedhere

\end{proof}

Based on Lemma~\ref{mainLemma} and a union bound, we obtain our main theorem:

\begin{theorem}
\label{thm:1}
Let $\Lambda=u(\theta,A_{1})-u(\theta,A_{N})$. 
If $\m{1}>-\infty$, \CTB's expected cumulative regret is bounded by $\frac{(N-1)(N-2)}{2}M'\frac{p-\Delta(2p-1)}{(2p-1)^{2}}\Lambda$.
\end{theorem}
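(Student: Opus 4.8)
The plan is to bound the expected cumulative regret by decomposing it over the arms that could possibly be pulled, and invoking Lemma~\ref{mainLemma} to control how often any suboptimal pair is pulled. First I would observe that at each time $t$, the first arm pulled is $X_{t,0} = B(\argmax_i m_i(t))$. Since $\m{1} > -\infty$ and $m_1(0) = \m{1}$, cell $C_1$ always has finite score, so $\argmax_i m_i(t)$ is always a cell with finite initial score, and moreover $m_1(t) \ge m_s(t)$ fails only finitely often for any particular suboptimal cell. The key structural fact I would extract is that $B(1) = 1$, so arm $1$ is ``always available'' to be chosen unless some other cell $C_s$ with $B(s) \ne 1$ has $m_s(t) \ge m_1(t)$. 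Thus positive regret at time $t$ requires both pulled arms to be suboptimal, which in turn forces the event that the duel is between two arms $i,j$ with $1 < i < j$ (or at least one of them is $1$, in which case regret is $0$).

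Next I would write $r(t) \le \Lambda \cdot \mathbbm{1}\{X_{t,0} \ne 1 \text{ and } X_{t,1} \ne 1\}$, since the per-step regret is at most $\Lambda = u(\theta,A_1) - u(\theta,A_N)$ whenever arm $1$ is not among the pulled pair, and is $0$ otherwise. When neither pulled arm is arm $1$, the pair $(X_{t,0},X_{t,1})$ is some pair $(i,j)$ with $1 < i < j$. Summing over $t$, we get $R(T) = \sum_{t=1}^T r(t) \le \Lambda \sum_{1 < i < j} N_{i,j}(T)$, where $N_{i,j}(T)$ is the number of times arms $i$ and $j$ are pulled together, exactly as defined before Lemma~\ref{mainLemma}. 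Taking expectations and letting $T \to \infty$, $\mathbb{E}[R(T)] \le \Lambda \sum_{1 < i < j} \mathbb{E}[N_{i,j}(\infty)]$.

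Then I would apply Lemma~\ref{mainLemma}, which gives $\mathbb{E}[N_{i,j}(t)] \le M' \frac{p - \Delta(2p-1)}{(2p-1)^2}$ uniformly in $t$ for each pair with $1 < i < j$, hence the same bound holds in the limit. The number of pairs $(i,j)$ with $1 < i < j$ is $\binom{N-1}{2} = \frac{(N-1)(N-2)}{2}$. Multiplying the per-pair bound by this count and by $\Lambda$ yields exactly $\frac{(N-1)(N-2)}{2} M' \frac{p - \Delta(2p-1)}{(2p-1)^2} \Lambda$, which is the claimed bound. Since this bound is independent of $T$, the expected cumulative regret is constant in $T$.

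The main obstacle I anticipate is justifying cleanly the first reduction step: namely, that positive regret implies both pulled arms differ from arm $1$, and hence that the pulled pair lies in the set $\{(i,j) : 1 < i < j\}$ indexed by $N_{i,j}$. This requires carefully unwinding the definition of the \CTB\ selection rule (Steps 1 and 2 of Algorithm~\ref{algo1}): if $X_{t,0} = 1$ then regret is $0$ trivially; if $X_{t,0} \ne 1$, one must argue arm $1$ is still not pulled as $X_{t,1}$, which is automatic since $X_{t,1}$ is chosen to have $B(\cdot) \ne X_{t,0}$ — but we need the case analysis to show that in fact $X_{t,0} \ne 1$ already forces suboptimality and that the index pair is genuinely among those counted. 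One also needs the (minor) measure-theoretic point that $\mathbb{E}[\sum_t r(t)] = \sum_t \mathbb{E}[r(t)]$ and that $\sup_t \mathbb{E}[N_{i,j}(t)] = \mathbb{E}[\lim_t N_{i,j}(t)]$ by monotone convergence. Everything else is a union bound and arithmetic.
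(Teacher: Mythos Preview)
Your proposal is correct and follows essentially the same approach as the paper, which states only that the theorem follows from Lemma~\ref{mainLemma} ``and a union bound'': bound $r(t)$ by $\Lambda$ times the indicator that neither pulled arm is arm $1$, decompose that indicator as $\sum_{1<i<j} D_{i,j}(t)$, sum to get $\mathbb{E}[R(T)] \le \Lambda \sum_{1<i<j}\mathbb{E}[N_{i,j}(T)]$, and apply Lemma~\ref{mainLemma} to each of the $\binom{N-1}{2}$ terms. One small remark: the ``obstacle'' you flag is not actually an obstacle, and your sketch of it is slightly tangled---you do \emph{not} need to argue that $X_{t,0}\ne 1$ forces $X_{t,1}\ne 1$ (it doesn't; if $X_{t,0}=2$ then $X_{t,1}=1$ is perfectly possible), you only need the immediate observation from the definition of weak regret that $r(t)>0$ implies $1\notin\{X_{t,0},X_{t,1}\}$, which is exactly your displayed inequality $r(t)\le\Lambda\cdot\mathbbm{1}\{X_{t,0}\ne 1,\ X_{t,1}\ne 1\}$.
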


In general, $M'$ can be as large as $2^N$.  However, as discussed above, we may set $\m{i}=-\infty$ for all the empty cells and assign finite $\m{i}$ to empty cells (\CTBone). In this setting, since each cell assigns a ranking over arms and different cells give different rankings, we can bound $M'$ by the number of permutations of $N$ arms, $N!$. Moreover, when the utility function is linear and $\prefdim=\armdim$, results in \cite{jamieson2011active} show $M'$ is $O(N^{2\prefdim})$.

\section{Computation for Decomposable $m_i$}
\label{Imple}

\CTB\ achieves a constant expected cumulative regret. However, a naive implementation of Algorithm~\ref{algo1} requires a great deal of memory to store $m_{i}(t)$ for each cell, which makes it computationally challenging for problems with many arms. In this section, we consider a special case of \CTB\ where $m_i(0)$ can be expressed in terms of an initial score for each pair of arms.  Specifically, we suppose that there exists a $r_{i,j}$ such that 
\begin{equation}
m_k(0)=\sum_{(i,j)\in J_{k}}r_{i,j} \quad \forall k. 
\label{eq:decompose}
\end{equation}
Here $r_{i,j}$ can be interpreted as a prior indicating the extent to which we believe that arm $i$ is preferred over arm $j$. In this special case, we describe an efficient computation method that scales to  problems with many arms.

Instead of storing $m_{i}(t)$, this method
stores $r_{i,j}$ and $q_{i,j}(t)$ and uses them to reconstruct $m_{i}(t)$ with Equation~\ref{reconstruct}.
Then, Steps 1 and 2 in Algorithm~\ref{algo1} are written as optimization problems in which $m_i(t)$ is replaced by this expression in terms of $q_{i,j}(t)$ and $r_{i,j}$. Toward this end, let $e_{i,j}$ denote a binary variable that will take value $e_{i,j}=1$ if we are to select a cell in $H_{i,j}$ and $0$ otherwise.  Then, based on Equation~\ref{reconstruct}, maximizing $m_{i}(t)$ is equivalent to maximizing $\sum_{i,j:i\neq j}e_{i,j}\times (q_{i,j}(t)+r_{i,j})$.

To find the best arm suggested by $\argmax_{i}m_{i}(t)$ in Step~1, and suggested by a similar argmax in Step~2, it is sufficient to find $\max_{i:B(i)=k} m_{i}(t)$ for each arm $k$.  This is the cell with largest $m_{i}(t)$ among those that believe $k$ is best. This problem is:
\begin{align}
\label{IP}
\begin{split}
\text{maximize } \displaystyle &\sum_{i,j:i\neq j} e_{i,j}\times (q_{i,j}(t)+r_{i,j}) \\
\text{subject to } \displaystyle
&e_{k,j}=1,\ \forall j\neq k\\
&e_{i,j}+e_{j,i}=1,\ i,j=1 ,..., N, i\neq j\\
&e_{i,j} \in \{0,1\},\ \forall i\neq j
\end{split}
\end{align}

There are three conditions in Equation~\ref{IP}. The first condition is $e_{k,j}=1$ $\forall j\neq k$, which means cell $C_{\ell}$ that satisfies the first condition must lie in the winning space $H_{k,j}$, $\forall j\neq k$. In other words, $C_{\ell}$ ranks arm $A_{k}$ better than any others and thus $B(\ell)=k$. The second and third condition together guarantee that cell $C_{\ell}$ either belongs to $H_{i,j}$ or $H_{j,i}$. 

Though Equation~\ref{IP} is an integer linear programming problem, which are usually computationally challenging, it is in fact easy to solve: the maximum value of this problem is reached when $e_{i,j}=1$ if $r_{i,j}+q_{i,j}(t) > q_{j,i}(t)+r_{j,i}$ for all $i\neq j$, $e_{i,j}=0$ if this strict inequality is reversed, and breaking ties arbitrarily between the solutions $(e_{i,j}=1, e_{i,j}=0)$ and 
$(e_{i,j}=0, e_{i,j}=1)$ for those $i,j$ with equality. 

Denote the maximum value of this problem at time $t$ as $f(k,t)$. After knowing $f(k,t)=\max_{B(i)=k} m_{i}(t)$, finding the arm with largest $m_{i}(t)$ in Step 1 is equivalent to finding $\argmax_{k}f(k,t)$.  Finding the arm with large $m_{i}(t)$ among those with a different best arm than $X_{t,0}$ in Step~2 is equivalent to finding $\argmax_{k \ne X_{t,0}}f(k,t)$.

For general values of $m_i(0)$ that do not satisfy \eqref{eq:decompose}, finding the largest $m_i(t)$ is computationally challenging. However, in applications, instead of setting $m_i(0)$ directly, we may have some prior information about the probability that the user prefers arm $i$ over arm $j$. This information can be used to construct $r_{i,j}$ since \CTB\ guarantees constant regret regardless of the values that $m_i(0)$ take.

\section{Bayesian Interpretation}
\label{sec:Bayes}
Although our problem is formulated in a frequentist setting, we show here that \CTB\ has a Bayesian interpretation. In this section, we construct a Bayesian posterior on $\theta$ given a prior and given an assumption that $p_{i,j}=q>0.5$ for all $i<j$, where $q$ may be the same or different from $p$, and $p_{i,j}$ may or may not be constant across $i,j$ in reality. 

We put a prior distribution $p_0$ on $\theta$, which induces a prior on the identity of the cell containing $\theta$.  The prior probability that $\theta$ is in cell $i$ is written $p_0(C_i)$, and is obtained by integrating $p_0$ over $C_i$. Let $p_t(C_i)$ indicate the posterior probability that $\theta$ is in cell $C_i$, at time $t$, given $p_{i,j}=q$ for all $i<j$.  The following pair of lemmas give recursive and non-recursive expressions for $p_t$.

\begin{lemma}
\label{lemma:bayes}
For compactness of notation, let $i=X_{t,0}$ and $j=X_{t,1}$.
Then the posterior distribution $p_{t+1}$ is,
\[
p_{t+1}(x) =
  \begin{cases}
    \frac{p_{t}(x)q}{p_{t}(H_{i,j}(Y_k))q+(1-p_{t}(H_{i,j}(Y_k)))(1-q)}     & \quad \text{if } x\in H_{i,j}(Y_t)\\
    \frac{p_{t}(x)(1-q)}{p_{t}(H_{i,j}(Y_k))q+(1-p_{t}(H_{i,j}(Y_k)))(1-q)}   & \quad \text{if } x\notin H_{i,j}(Y_t)\\
  \end{cases}
\]
\end{lemma}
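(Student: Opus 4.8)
The plan is to apply Bayes' rule with the likelihood induced by the working assumption that $p_{i,j}=q$ for all $i<j$. First I would write down the likelihood of the observation $Y_t$ given $\theta$. Recall that $X_{t,0}=i$ and $X_{t,1}=j$ are determined by the history (Steps 1--2 of Algorithm~\ref{algo1}, with ties broken by a fixed rule), so conditioning on the pulled arms carries no information about $\theta$ beyond what the history already provides. Under the assumption that, in any duel, the arm that $\theta$ prefers wins with probability $q$, the event $\{Y_t = y\}$ has conditional probability $q$ when $\theta$ lies in the winning space $H_{i,j}(y)$ that outcome $y$ declares the winner, and probability $1-q$ when $\theta$ lies in the complementary winning space. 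Thus $P(Y_t = y \mid \theta = x) = q\,\mathbbm{1}\{x \in H_{i,j}(y)\} + (1-q)\,\mathbbm{1}\{x \notin H_{i,j}(y)\}$, a likelihood that is piecewise constant in $x$: one value on $H_{i,j}(y)$ and another on its complement.

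Next I would apply Bayes' rule, $p_{t+1}(x) \propto p_t(x)\,P(Y_t \mid \theta = x)$, giving $p_{t+1}(x) = p_t(x)\,q / Z$ for $x \in H_{i,j}(Y_t)$ and $p_{t+1}(x) = p_t(x)(1-q)/Z$ for $x \notin H_{i,j}(Y_t)$, where $Z$ is the normalizing constant. Finally I would evaluate $Z$ by the law of total probability: $Z = q\,p_t(H_{i,j}(Y_t)) + (1-q)\bigl(1 - p_t(H_{i,j}(Y_t))\bigr)$, which is exactly the denominator appearing in the statement. Substituting $Z$ into the two cases yields the claimed formula.

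I expect no serious obstacle; the lemma is a direct Bayes' rule computation. The only points requiring care are (i) stating precisely that the likelihood depends on $x$ only through membership in $H_{i,j}(Y_t)$ --- this is where the assumption $p_{i,j}=q$ for all $i<j$ enters, and it is what makes the update tractable --- and (ii) keeping the bookkeeping of $H_{i,j}(k)$ straight, namely that $Y_t=0$ (arm $i$ won) corresponds to $H_{i,j}(0)=H_{i,j}$ and $Y_t=1$ corresponds to $H_{i,j}(1)=H_{j,i}$, so that $H_{i,j}(Y_t)$ is indeed the winning space consistent with the observed outcome. One may phrase the argument either with $p_t$ a density on $\mathbb{R}^{\prefdim}$ or, to sidestep measure-theoretic technicalities, directly in terms of the induced distribution over cells, since the update simply rescales $p_t$ by a constant on each of the two relevant regions and renormalizes.
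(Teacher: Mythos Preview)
Your proposal is correct and matches the paper's own proof: both are a direct application of Bayes' rule, writing the likelihood of $Y_t$ as $q$ on $H_{i,j}(Y_t)$ and $1-q$ on its complement, and computing the normalizing constant by the law of total probability. Your write-up is in fact more careful than the paper's (you explicitly note that the pulled arms are history-measurable and hence carry no extra information about $\theta$, and you keep the $H_{i,j}(k)$ bookkeeping straight), but the argument is the same.
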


Based on this lemma, we can rewrite the posterior distribution in terms of $m_{i}(t)-m_{i}(0)$.

\begin{lemma}
\label{posterior}
For each cell $C_{i}$, the posterior distribution after t comparison is 
\begin{align}
    p_{t}(C_{i}) \propto p_{0}(C_{i})q^{m_{i}(t)-m_{i}(0)}(1-q)^{t-m_{i}(t)+m_{i}(0)}. \nonumber
\end{align}
\end{lemma}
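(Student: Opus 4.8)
The plan is to prove Lemma~\ref{posterior} by induction on $t$, using the recursive update in Lemma~\ref{lemma:bayes} as the induction step, and translating the per-step multiplicative factors into the counting statistic $m_i(t)$.

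\textbf{Base case.} At $t=0$ we have $m_i(0) - m_i(0) = 0$ and $t - m_i(0) + m_i(0) = 0$, so the claimed expression reads $p_0(C_i) \propto p_0(C_i) q^0 (1-q)^0 = p_0(C_i)$, which holds trivially (the proportionality constant is $1$).

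\textbf{Induction step.} Assume $p_t(C_i) \propto p_0(C_i) q^{m_i(t)-m_i(0)} (1-q)^{t-m_i(t)+m_i(0)}$, i.e. there is a normalizing constant $Z_t$ (independent of $i$) with $p_t(C_i) = Z_t^{-1} p_0(C_i) q^{m_i(t)-m_i(0)} (1-q)^{t-m_i(t)+m_i(0)}$. Now apply Lemma~\ref{lemma:bayes} with the arms pulled at time $t+1$ and the observed outcome $Y_{t+1}$; let $H = H_{X_{t+1,0},X_{t+1,1}}(Y_{t+1})$ be the winning space that wins this duel. First I would note that since each cell $C_i$ is entirely contained in exactly one of $H$ or its complement, the pointwise update in Lemma~\ref{lemma:bayes} induces a cell-level update: $p_{t+1}(C_i) \propto p_t(C_i) q$ if $C_i \subseteq H$, and $p_{t+1}(C_i) \propto p_t(C_i)(1-q)$ otherwise, with a common normalizer. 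The key observation is that, by the update rule~\eqref{update}, $C_i \subseteq H$ is exactly the event that $m_i(t+1) = m_i(t)+1$, while $C_i \not\subseteq H$ corresponds to $m_i(t+1) = m_i(t)$. In the first case the exponent of $q$ increments by one and the exponent of $(1-q)$ stays fixed while $t$ increases by one, so $t+1 - m_i(t+1) + m_i(0) = t - m_i(t) + m_i(0)$; in the second case the exponent of $q$ is unchanged and the exponent of $(1-q)$ increments by one. Either way, multiplying the inductive expression by the appropriate factor ($q$ or $1-q$) produces exactly $p_0(C_i) q^{m_i(t+1)-m_i(0)} (1-q)^{t+1 - m_i(t+1)+m_i(0)}$, up to an $i$-independent constant. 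This closes the induction. Normalization over the (finitely many) cells determines the constant and makes the final statement a genuine probability distribution; the empty cells contribute $p_0(C_i)=0$ and are harmless.

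\textbf{Main obstacle.} The only subtlety — and the step I would be most careful about — is the bookkeeping that ties the combinatorial quantity $m_i(t+1)-m_i(t) \in \{0,1\}$ to the Bayesian factor $q$ versus $1-q$: one must check that ``$C_i$ lies in the winning space that won the duel'' is precisely the indicator appearing in~\eqref{update}, and that this is the same as ``$x \in H_{i,j}(Y_t)$'' in Lemma~\ref{lemma:bayes} holding for all $x \in C_i$ simultaneously (which it is, because $C_i$ is an intersection of winning half-spaces and hence a subset of, or disjoint from, any given $H_{i,j}(k)$). Once that identification is made cleanly, the rest is a one-line matching of exponents. A secondary, purely notational point is to state the proportionality as equality-up-to-normalization and to invoke finiteness of the cell collection ($M=2^N$) so that the normalizer is well-defined.
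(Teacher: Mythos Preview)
Your proposal is correct and follows essentially the same approach as the paper's own proof: induction on $t$, with Lemma~\ref{lemma:bayes} supplying the multiplicative factor $q$ or $1-q$ at each step according to whether $C_i$ lies in the winning half-space, and the identification of that event with the increment of $m_i(\cdot)$ via~\eqref{update}. Your write-up is in fact slightly more explicit than the paper's about why each cell is wholly contained in or disjoint from the winning space, and about the role of the normalizing constant, but the argument is the same.
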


We leave the proof of both Lemmas to the appendix.
Lemma~\ref{posterior} allows us to rewrite $p_{t}(C_{i})$ as 
\begin{align}
        p_{t}(C_{i}) \propto p_{0}(C_{i})q^{m_{i}(t)-m_{i}(0)}(1-q)^{t-m_{i}(t)+m_{i}(0)} \nonumber \\
        \propto p_{0}(C_{i})(\frac{q}{1-q})^{m_{i}(t)-m_{i}(0)}. \nonumber
\end{align}
Thus, choosing the cell to maximize the posterior probability is equivalent to choosing the cell to maximize
$\log(p_{0}(C_{i}))+(m_{i}(t)-m_{i}(0))\log\left(\frac{q}{1-q}\right).$
Thus, if 
\begin{equation}
\label{eq:mi0}
m_i(0)=\log(p_{0}(C_{i})) \big/ \log\left(\frac{q}{1-q}\right),
\end{equation}
then maximizing the posterior probability that $\theta$ is in $C_i$ is equivalent to maximizing $m_i(t)$,
the first cell selected by \CTB\ is the cell with the largest posterior probability of containing $\theta$, and the second cell selected is the largest among those with a different best arm from the first.

Thus, if one has prior information about the location of $\theta$ and an estimate $q$ of a typical value of $p_{ij}$, then a natural way to set $m_i(0)$ is via \eqref{eq:mi0}.
In addition, since $p_{0}(C_{i})=0$ for empty cells, following \eqref{eq:mi0} also sets $m_i(0)=-\infty$ for these cells as discussed before. 

\section{Numerical Experiments}
\label{sec:exp}

In this section, we compare the three variants of \CTB\ described in section~\ref{Methods}, CTB-1, CTB-2, and CTB-3, with three benchmarks: Thompson Sampling, Relative Upper Confidence Bound (RUCB) and Winner-Stays (WS). 

\begin{itemize}
\item Thompson sampling uses a posterior distribution over $\theta$ computed by beginning with a prior distribution on the location of $\theta$, and updating it using Bayes rule and knowledge of $p_{i,j}$. At time t, it generates $\theta_{t}$ from this posterior distribution $p_{t}$ and pulls the two arms that $\theta_{t}$ ranks as best and second best. In our implementation, we track the prior/posterior explicitly by storing a probability for each cell. We emphasize that Thompson sampling as we consider it here requires knowledge of $p_{i,j}$ which is not typically not available.
\item RUCB is as described in \cite{zoghi2014relative}. We choose it as our benchmark over other algorithms designed for strong regret from the literature because it works well relative to other algorithms designed for strong regret in previous literature when a Condorcet winner exists, and existence of a Condorcet winner is a consequence of our total order assumption. Though there are algorithms that outperform RUCB in some settings such as CCB and SCB \citep{zoghi2015copeland}, they typically work better when a Condorcet winner does not exist.
\item WS is as described in \cite{chen2017dueling}, and is selectetd because it is designed for the weak regret setting.  In our plots, WS-W is the variant of WS designed specifically for weak regret.
\end{itemize}

We consider two experimental settings described below, with results pictured in Figure~\ref{fig:result2}. 

\begin{figure*}[!h]
    \centering
    \begin{subfigure}[t]{0.5\textwidth}
        \centering
        \includegraphics[width=1\textwidth]{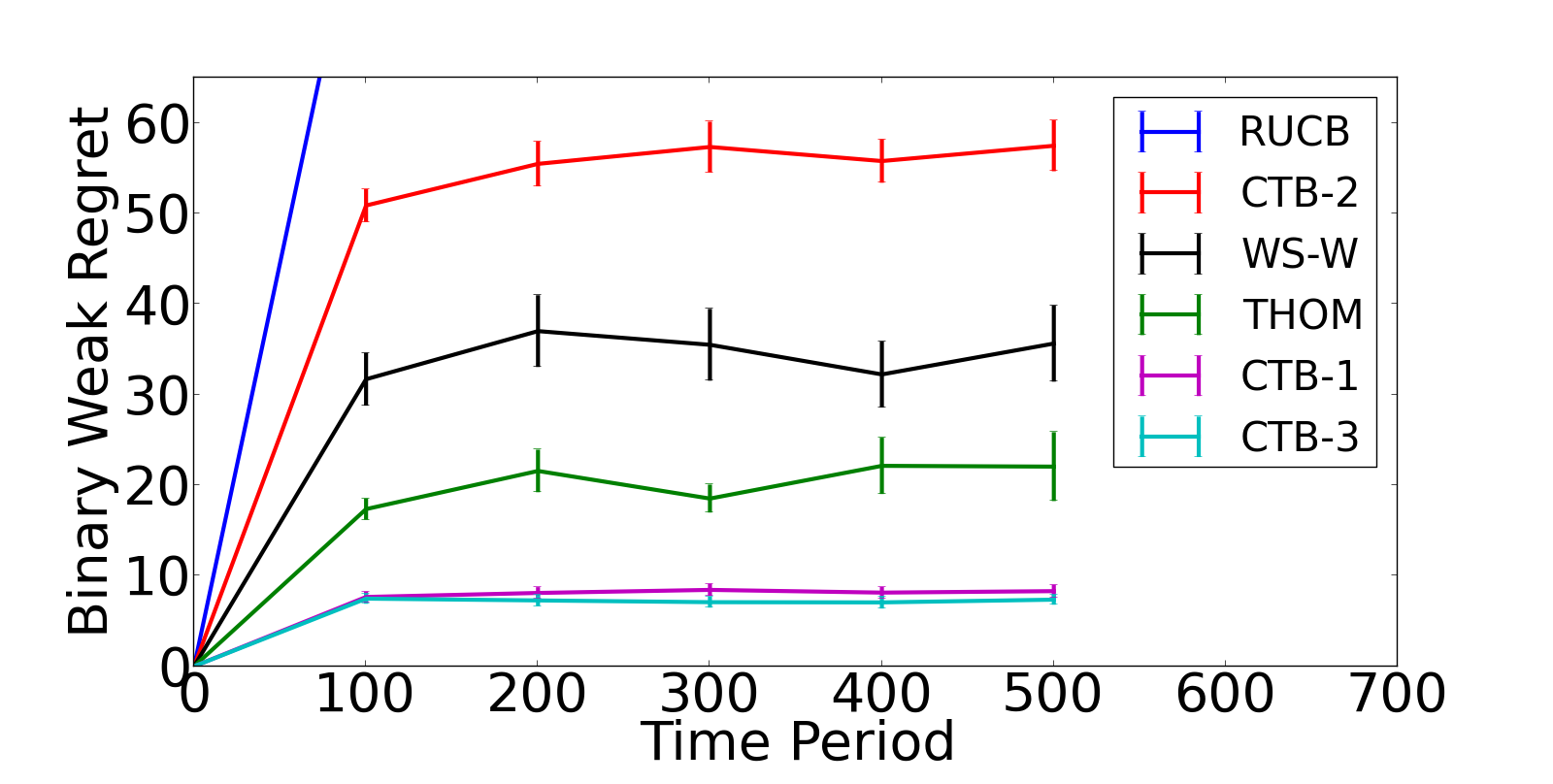}
        \caption{Binary Regret and Constant $p_{i,j}$}   
        \label{fig:binary}
        \end{subfigure}%
    ~ 
    \begin{subfigure}[t]{0.5\textwidth}
        \centering
        \includegraphics[width=1\textwidth]{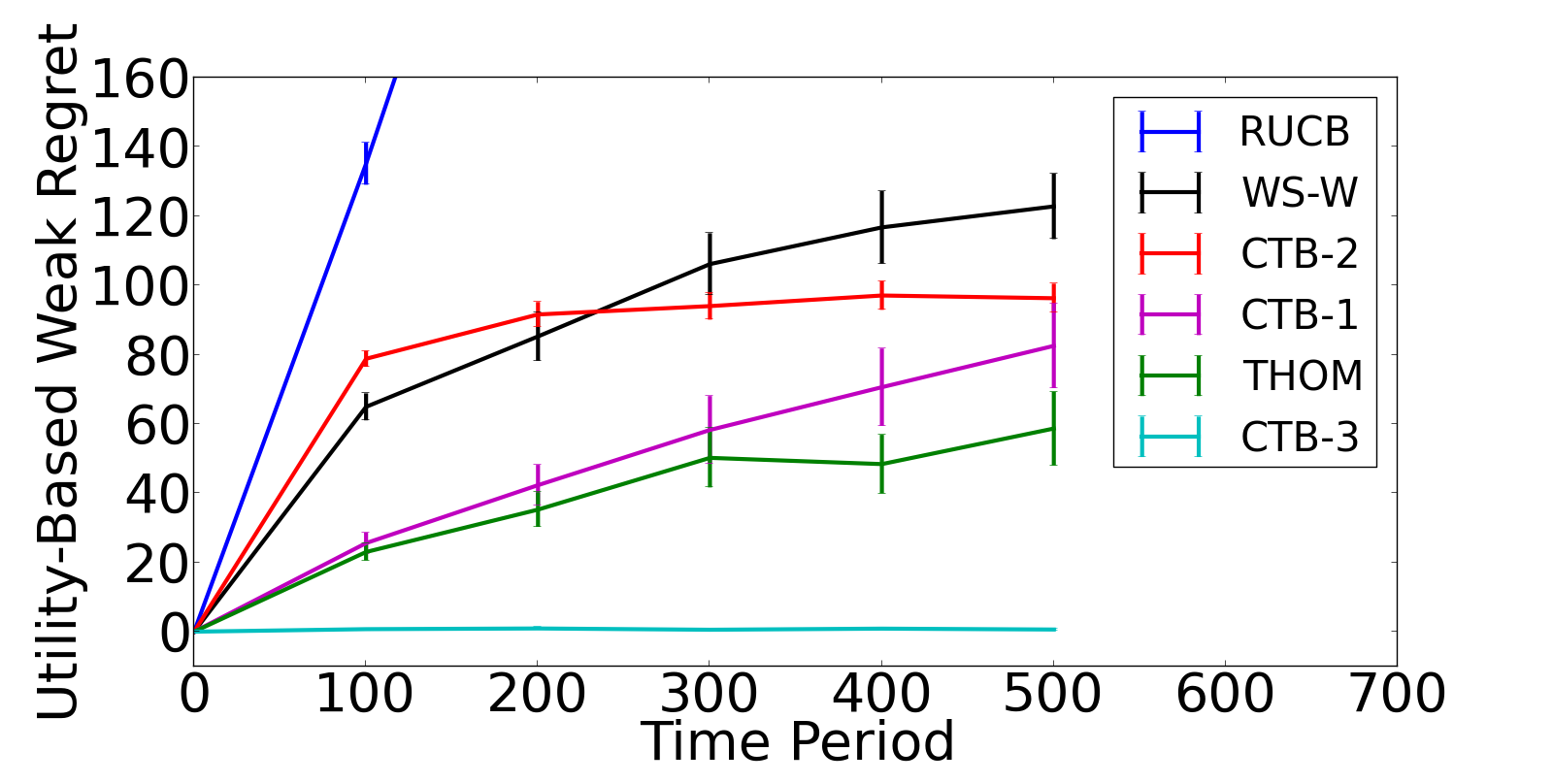}
        \caption{Bradley-Terry Regret and $p_{i,j}$}
        \label{fig:utility}
    \end{subfigure}
    \caption{
Performance comparison of the three \CTB\ variants from section~\ref{Methods} against benchmarks WS-W, RUCB and Thompson Sampling (THOM) using simulated datasets. \CTBthree\ and Thompson sampling use prior information, and in this group \CTBthree\ performs best. Among the four algorithms that do not use prior information, \CTBone\ performs best. \CTBtwo\ under-performs WS-W in the binary regret setting and for $t=100,200$ in the Bradley-Terry setting, and outperforms WS-W when $t=300,400,500$ in the Bradley-Terry setting.}
\label{fig:result2}
\end{figure*}

Since RUCB performs poorly in both experiments compared with other algorithms, we set the $y$-axis to emphasize the relative performance of the other algorithms. We include a plot over a wider $y$-axis showing RUCB's performance in the supplement.

\subsection{Binary Regret and Constant $p_{i,j}$}
\label{sec:binary}
In this experimental setting, we set $p_{i,j}=0.8$ for all $i<j$. We have $N=20$ arms uniformly generated from the $2$-dimensional unit circle. The preference vector $\theta$ is generated uniformly at random from the $2$-dimensional unit circle. We set regret to $1$ if both of the pulled arms are not optimal, i.e. $u(\theta,A_1)=1$ and $u(\theta,A_i)=0$ for $i\neq 1$.  To satisfy our previous assumption that $u(\theta,A_i)$ be distinct across $i$, we may equivalently set $u(\theta,A_i)=i\cdot\epsilon$, and take $\epsilon$ small.

Figure~\ref{fig:binary} shows that \CTBone\ and \CTBthree\ perform comparably and both outperform WS-W and Thompson Sampling. \CTBtwo\ does not perform as well as WS-W and Thompson Sampling.
Both Thompson sampling and \CTBthree\ have access to the correct prior and use the true value of $p$ to perform updating.

% In both experiments we select $\theta$ at random from a uniform Bayesian prior density, and the Bayesian algorithms (CTB-3 and Thompson sampling) have access to this prior.  The other algorithms to do not.  Access to $p_{i,j}$ for updating is discussed below.  

\subsection{Bradley-Terry Regret and $p_{i,j}$}
\label{sec:utility}

In this experimental setting, we set utility using the Bradley-Terry model described in section~\ref{probForm}. As in the first experimental setting, we have $N=20$ arms on the $2$-dimensional unit circle. Among these arms, $19$ are uniformly generated from $\{x<0,y<0,x^2+y^2=1\}$ and $1$ arm is uniformly generated from $\{x>0, y>0, x^2+y^2=1\}$. The user's preference $\theta$ is also uniformly generated from $\{x>0, y>0, x^2+y^2=1\}$, but the Bayesian algorithms (\CTBthree and Thompson sampling) use another less information prior: that $\theta$ is uniform on the unit circle.  Thompson sampling performs its update using the true $p_{i,j}$, while \CTBthree\ uses a rough approximation of $q=0.6$ to set $m_i(0)$ to model the fact that we would not know $p$ or $p_{i,j}$ in practice.

% When applying \CTBthree, we need to set $m_i$ so that it represents the prior information. In this experiment, similar to section~\ref{sec:Bayes}, we set $m_i=\frac{\log(p_{0}(C_i)}{\log(\frac{q}{1-q})}$. In this experiment (as well as in many real applications), we do not know $q$ and the probability of $p_{i,j}$ may not be a constant. However, based on the proof of Theorem~\ref{thm:1}, our regret bound holds true as long as $m_1>-\infty$. In this specific experiment, we choose $q=0.6$. This choice of $m_i$ provides a approximation of the prior information we have.

Figure~\ref{fig:utility} shows that both \CTBthree\ and Thompson Sampling takes advantage of the prior information and the dependence among arms. \CTBthree\ uses this information more efficiently and significantly outperforms Thompson Sampling. Among the four algorithms (\CTBone, \CTBtwo, RUCB and WS) that do not use prior information, \CTBone\ performs best. Though \CTBtwo\ does not perform as well as WS at $t=100, 200$, it outperforms WS when $t=300,400,500$. 

\section{Conclusion}
In this paper, we consider dueling bandits for weak regret, with application to recommender systems and online content recommendation. We formulate a new setting which differs from the traditional dueling bandits in which arms are dependent. We propose an algorithm \CTB, and show it has constant expected cumulative regret and strong empirical performance.
\newpage

\appendix
\section*{Appendix A.}

\subsection*{Proof of Lemma~\ref{basic}}

First we prove another lemma.
\begin{lemma}
Suppose $Z(k)$ is a random walk starting with $Z(0)=0$, $Z(k+1)=Z(k)+1$ with probability $p>0.5$ and $Z(k+1)=Z(k)-1$ with probability $1-p$. Then for $S\in\mathbb{N}$ we have
\begin{align}
E\left[\sum_{t=0}^{\infty}\mathbbm{1}\{Z(t)\leq S\}\right] =\frac{p+S(2p-1)}{(2p-1)^{2}}.
\end{align}
\end{lemma}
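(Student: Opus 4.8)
The plan is to evaluate the expected occupation time below level $S$ by decomposing it over levels and exploiting translation invariance together with the classical gambler's-ruin identity for a $p$-biased walk. Write $\sum_{t=0}^{\infty}\mathbbm{1}\{Z(t)\leq S\}=\sum_{j\leq S}L_j$, where $L_j:=\sum_{t=0}^{\infty}\mathbbm{1}\{Z(t)=j\}$ is the total number of visits to level $j$; since all terms are nonnegative, the desired expectation equals $\sum_{j\leq S}\mathbb{E}[L_j]$, so it suffices to compute each $\mathbb{E}[L_j]$.

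First I would compute, for an arbitrary level $j$, the probability $r$ that the walk started at $j$ ever returns to $j$. Conditioning on the first step, $r=p\cdot P(\text{hit }j\text{ from }j+1)+(1-p)\cdot P(\text{hit }j\text{ from }j-1)$. Setting $\rho:=(1-p)/p<1$, gambler's ruin gives $P(\text{hit }j\text{ from }j+1)=\rho$ (the probability a $p$-biased walk ever decreases by one net level), while $P(\text{hit }j\text{ from }j-1)=1$ because the increments are i.i.d.\ with positive mean $2p-1$, so $Z(t)\to+\infty$ almost surely and the walk hits every level above its starting point. Hence $r=p\rho+(1-p)=2(1-p)<1$. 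By the strong Markov property, conditional on visiting level $j$ at least once the number of visits is geometric with success probability $1-r$, so its conditional mean is $1/(1-r)=1/(2p-1)$.

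Next I would multiply by the probability that level $j$ is ever visited. For $j\geq 0$ this probability is $1$ (the walk starts at $0$ and tends to $+\infty$), giving $\mathbb{E}[L_j]=1/(2p-1)$ for the $S+1$ levels $j=0,1,\dots,S$. For $j=-k$ with $k\geq 1$, gambler's ruin gives hitting probability $\rho^{k}$, so $\mathbb{E}[L_{-k}]=\rho^{k}/(2p-1)$. Summing and using $\sum_{k\geq 1}\rho^{k}=\rho/(1-\rho)=(1-p)/(2p-1)$ yields $\sum_{j\leq S}\mathbb{E}[L_j]=\frac{S+1}{2p-1}+\frac{1-p}{(2p-1)^{2}}$, and simplifying the numerator via $(S+1)(2p-1)+(1-p)=S(2p-1)+p$ gives $\frac{p+S(2p-1)}{(2p-1)^{2}}$, as claimed.

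The routine parts are the interchange of expectation with the nonnegative sum and the geometric series. The only points that need genuine care are the two probabilistic inputs: that $Z(t)\to+\infty$ almost surely, which is the strong law of large numbers applied to the $\pm1$ increments (mean $2p-1>0$), and the gambler's-ruin formula $P(\text{walk ever reaches one level below its start})=(1-p)/p$, obtainable from optional stopping applied to the martingale $\rho^{Z(t)}$ or by solving the one-step recursion for this hitting probability directly. Once these are in hand, everything reduces to the bookkeeping above.
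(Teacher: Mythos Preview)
Your proof is correct, and it takes a genuinely different route from the paper's. The paper computes $\mathbb{E}\bigl[\sum_{t\ge 0}\mathbbm{1}\{Z(t)\le 0\}\bigr]$ first, by conditioning on the first step: if $Z(1)=-1$ (probability $1-p$) it uses the expected first-passage time $A=\mathbb{E}[\min\{t>0:Z(t)=1\}\mid Z(0)=0]=1/(2p-1)$ (via optional stopping on $Z(t)-(2p-1)t$) plus a renewal; if $Z(1)=1$ (probability $p$) it uses the return probability $B=(1-p)/p$ (by first-step analysis) times a renewal. Solving the resulting linear equation gives the $S=0$ case, and the paper then obtains general $S$ by a separate induction, relating the occupation time below $S$ to that below $S-1$.

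Your level-by-level local-time decomposition is more modular: it avoids the hitting-time quantity $A$ entirely, replaces the induction on $S$ by a direct sum, and makes transparent why the answer is affine in $S$ (each nonnegative level $0\le j\le S$ contributes the same $1/(2p-1)$, while the negative levels contribute a fixed geometric tail). Both arguments ultimately rest on the same gambler's-ruin input $\rho=(1-p)/p$ and on $Z(t)\to+\infty$ a.s.; the paper packages these through $A$ and $B$ and a recursion, whereas you package them through hitting probabilities and the geometric law of the visit count. Either is fine; yours is arguably cleaner for this particular statement.
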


\begin{proof}
Denote $A=\mathbb{E}[t:\min_{t>1}Z(t)=0|Z(1)=-1]$ and $B=P(\exists t, Z(t)=0|Z(1)=1)$, then we know
\begin{align}
\mathbb{E}\left[\sum_{t=0}^{\infty}\mathbbm{1}\{Z(t)\leq 0\}\right]=1+(1-p)\left(A+\mathbb{E}\left[\sum_{t=0}^{\infty}\mathbbm{1}\{Z(t)\leq 0\}\right]\right)+pB\mathbb{E}\left[\sum_{t=0}^{\infty}\mathbbm{1}\{Z(t)\leq 0\}\right]. \nonumber 
\end{align}
Now we need to calculate the expression for A and B respectively.

Based on the definition of A, we can rewrite A as $\mathbb{E}[t:\min_{t>1}Z(t)=1|Z(t)=0]$. It is easy to show that $Y(t):=Z(t)-(2p-1)t$ is a martingale. Here we define a stopping time $\tau$ as $\min\{t>1:Z(1)=1\}$. Then we know $Y(t)$ stops at $\tau$ is a martingale and thus $\mathbb{E}[Y(\tau)]=\mathbb{E}[Z(\tau)]-(2p-1)E[\tau]=0$. Thus $A=\frac{1}{2p-1}$.

For B, based on the first step analysis, we know
\begin{align}
B = (1-p) + p\times B^{2}. \nonumber
\end{align}
Solving this equation, we get $B=\frac{1-p}{p}$. 

Plus in A and B's expression, we have
\begin{align}
\mathbb{E}\left[\sum_{t=0}^{\infty}\mathbbm{1}\{Z(t)\leq 0\}\right]=\frac{p}{(2p-1)^{2}}. \nonumber
\end{align}
Now we compute $\mathbb{E}\left[\sum_{t=0}^{\infty}\mathbbm{1}\{Z(t)\leq 1\}\right]$. Based on the same reasoning, we know
\begin{align}
\mathbb{E}\left[\sum_{t=0}^{\infty}\mathbbm{1}\{Z(t)\leq 1\}\right]=1+(1-p)\left(A+\mathbb{E}\left[\sum_{t=0}^{\infty}\mathbbm{1}\{Z(t)\leq 1\}\right]\right)+p\times \mathbb{E}\left[\sum_{t=0}^{\infty}\mathbbm{1}\{Z(t)\leq 0\}\right]. \nonumber
\end{align}
Solving it, we get $\mathbb{E}\left[\sum_{t=0}^{\infty}\mathbbm{1}\{Z(t)\leq 1\}\right]=\frac{p+(2p-1)}{(2p-1)^2}$. For general $S$, we have
\begin{align}
\mathbb{E}\left[\sum_{t=0}^{\infty}\mathbbm{1}\{Z(t)\leq S\}\right]=1+(1-p)\left(A+\mathbb{E}\left[\sum_{t=0}^{\infty}\mathbbm{1}\{Z(t)\leq S\}\right]\right)+p\times \mathbb{E}\left[\sum_{t=0}^{\infty}\mathbbm{1}\{Z(t)\leq S-1\}\right], \nonumber
\end{align}
by induction, we know our Lemma is true.
\end{proof}

\begin{comment}
\begin{lemma}
Suppose $Z(k)$ is a random walk starts with $Z(0)=0$.  $Z(k+1)=Z(k)+1$ with probability $p>0.5$ and $Z(k+1)=Z(k)-1$ with probability 1-p, then for $\Delta\geq 0$, we have
\begin{equation}
\mathbb{E}\left[\sum_{t=0}^{\infty}\mathbbm{1}\{Z(t)\leq \Delta\}\right] =\frac{p+\Delta(2p-1)}{(2p-1)^{2}}.
\end{equation}
\end{lemma}

\begin{proof}
Denote $C=\mathbb{E}[\min_{t}:Z(t)=\Delta]$ and $D=\mathbb{E}[\sum_{t=0}^{\infty}\mathbbm{1}\{Z(t)\leq \Delta\}|Z(0)=\Delta]$. Since $\Delta\geq 0$, we have
\begin{equation}
    E[\sum_{t=0}^{\infty}\mathbbm{1}\{Z(t)\leq \Delta\}]=C+D, 
\end{equation}
and we know $D=E[\sum_{t=0}^{\infty}\mathbbm{1}\{Z(t)\leq 0\}|Z(0)=0]=\frac{p}{(2p-1)^{2}}$.

Because $Y(t):=Z(t)-(2p-1)t$ is a martingale and $\tau=\min{t:Z(t)=\Delta\}$ is a stopping time, we know $Y(\tau)$ is a martingale. Thus, $E[Y(\tau)]=Z(\tau)-(2p-1)E[\tau]=E[Y(0)]=0$ and $E[\tau]=\frac{\Delta}{2p-1}$.

Thus, we know $E[\sum_{t=0}^{\infty}\mathbbm{1}\{Z(t)\leq \Delta\}]=\frac{p+\Delta(2p-1)}{(2p-1)^{2}}$.
\end{proof}
\end{comment}

Now we return to the proof of Lemma 1.
\begin{proof}
Suppose W(t) is a random walk and $W(t+1)=W(t)+1$ with probability p and $W(t+1)=W(t)-1$ with probability 1-p. Based on the previous Lemma, we just need to show
\begin{align}
\mathbb{E}\left[\sum_{t=0}^{\infty}\mathbbm{1}\{Z(t)\leq S\}\right]\leq \mathbb{E}\left[\sum_{t=0}^{\infty}\mathbbm{1}\{W(t)\leq S\}\right].
\label{bound}
\end{align}
Because $E[\sum_{t=0}^{\infty}\mathbbm{1}\{W(t)\leq S\}]=\sum_{t=0}^{\infty}P(W(t)\leq S)$ and 
\begin{align}
P(W(t)\leq S)=\sum_{2m\geq t-S}{t \choose m}p^{t-m}(1-p)^{m}\geq P(Z(t)\leq S), \nonumber
\end{align}
we know Equation~\ref{bound} holds true.
\end{proof}

\subsection*{Proof of Lemma~\ref{lemma:bayes}}

\begin{proof}
We first prove it for $Y_t=0$ and $x\in H_{i,j}$. This is because
\begin{align}
p_{t+1}(x) &= p_{t+1}(\theta\in x) \nonumber \\
&=P(\theta\in x| Y_t=0, p_t(\cdot)) \nonumber \\
&=\frac{P(\theta\in x, Y_t=0, p_t(\cdot))}{P(Y_t=0, p_t(\cdot))} \nonumber \\
&=\frac{P(\theta\in x, Y_t=0, p_t(\cdot))}{P(Y_t=0, p_t(\cdot)|\theta \in H_{i,j})P(\theta \in H_{i,j})+P(Y_t=0, p_t(\cdot)|\theta \notin H_{i,j})P(\theta \notin H_{i,j})} \nonumber \\
&=\frac{p_{t}(x)q}{p_t(H_{i,j})q+(1-p_t(H_{i,j}))(1-q)}. \nonumber 
\end{align}

The other three cases follow the same reasoning and we omit the proof.
\end{proof}

\subsection*{Proof of Lemma~\ref{posterior}}

\begin{proof}
We prove this lemma using induction. This is obviously true when t=0. Suppose this is true at time t-1. Without loss of generality, we write
\begin{align}
p_{t-1}(C_k)=\frac{p_{0}(C_{i})q^{m_{i}(t-1)-m_{i}(0)}(1-q)^{t-1-m_{i}(t-1)+m_{i}(0)}}{M(t-1)}, \nonumber
\end{align}
where $M(t-1)$ is a scaling constant. At time t, suppose we choose $A_{i}$ and $A_{j}$ for comparison and $A_{i}$ wins the duel. Denote $M(t)=M(t-1) * [p_{t-1}(H_{i}{j}) * q + (1-p_{t-1}(H_{i,j}))(1-q)]$, then if $C_{k}\in H_{i,j}$:
\begin{align}
     p_{t}(C_{k}) &= \frac{p_{t-1}(C_{k})q}{p_{t-1}(H_{i,j})q+(1-p_{t-1}(H_{i,j}))(1-q)} \nonumber \\
     &=\frac{p_{0}(C_{k})q^{m_{k}(t-1)-m_{k}(0)}(1-q)^{t-1-m_{k}(t-1)+m_{k}(0)}q}{M(t-1)[p_{t-1}(H_{i,j})q+(1-p_{t-1}(H_{i,j}))(1-q)]} \nonumber \\
     &=\frac{p_{0}(C_{k})q^{m_{k}(t)-m_{k}(0)}(1-q)^{t-m_{k}(t)+m_{k}(0)}}{M(t)}, \nonumber 
\end{align}
where the last line is based on the definition of $m_{k}(t)$ and $M(t)$. Similarly, if $C_{k}\notin H_{i}{j}$, then
\begin{align}
    p_{t}(C_{k}) &= \frac{p_{t-1}(C_{k})(1-q)}{p_{t-1}(H_{i,j})q+(1-p_{t-1}(H_{i,j}))(1-q)}   & \nonumber \\
    &=\frac{p_{0}(C_{k})q^{m_{k}(t-1)-m_{k}(0)}(1-q)^{t-1-m_{k}(t-1)+m_{k}(0)}(1-q)}{M(t-1)[p_{t-1}(H_{i,j})q+(1-p_{t-1}(H_{i,j}))(1-q)]} \nonumber \\
     &=\frac{p_{0}(C_{k})q^{m_{k}(t)-m_{k}(0)}(1-q)^{t-m_{k}(t)+m_{k}(0)}}{M(t)}. \nonumber
\end{align}

\end{proof}

\subsection*{Full Plot of Section~\ref{sec:exp}}
We include a plot which contains full information for RUCB. See Figure~\ref{fig:result3} for details.

\begin{figure*}[!h]
    \centering
    \begin{subfigure}[t]{0.5\textwidth}
        \centering
        \includegraphics[width=1\textwidth]{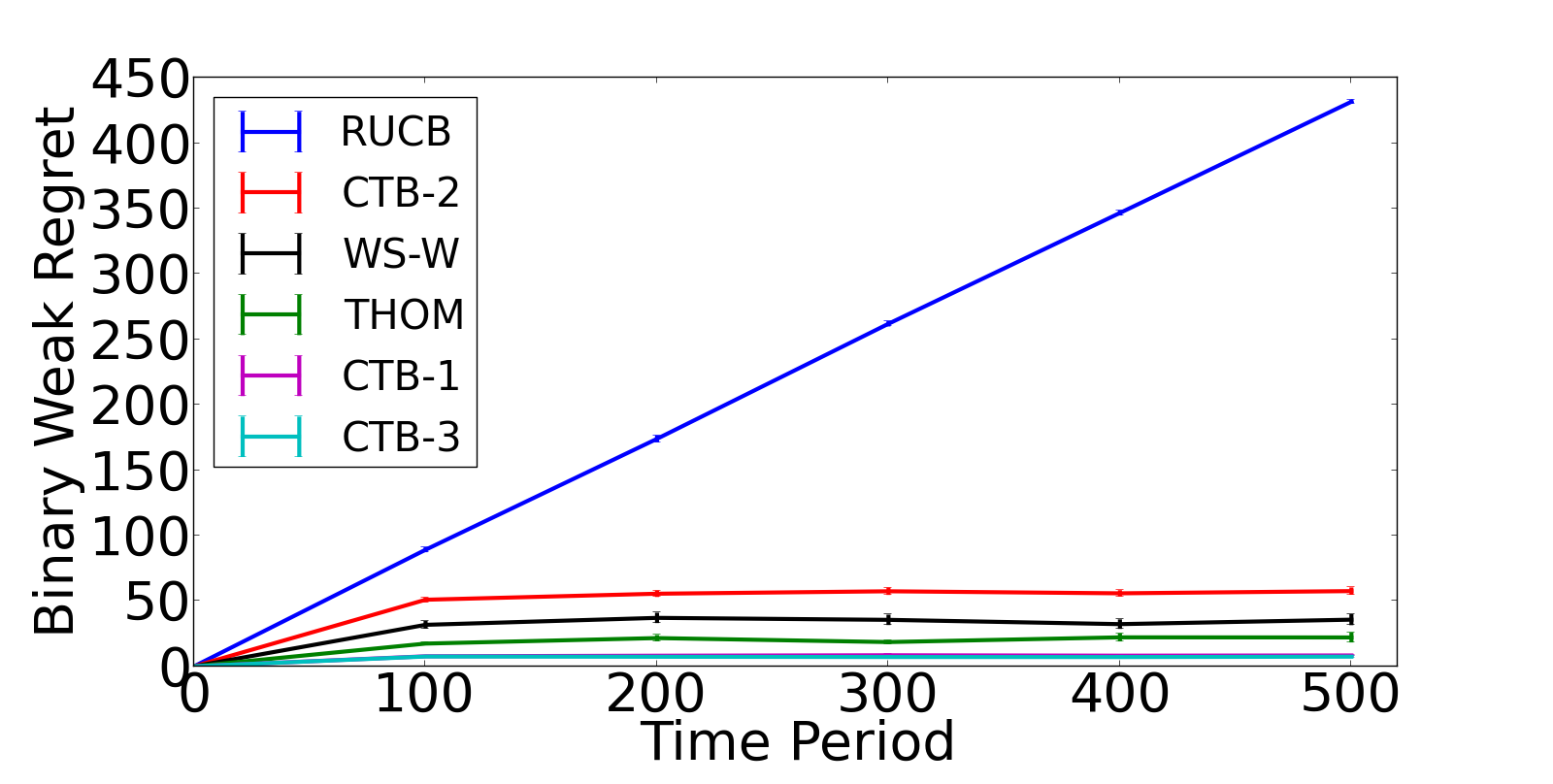}
        \caption{Binary Regret and Constant $p_{i,j}$}   
        \end{subfigure}%
    ~ 
    \begin{subfigure}[t]{0.5\textwidth}
        \centering
        \includegraphics[width=1\textwidth]{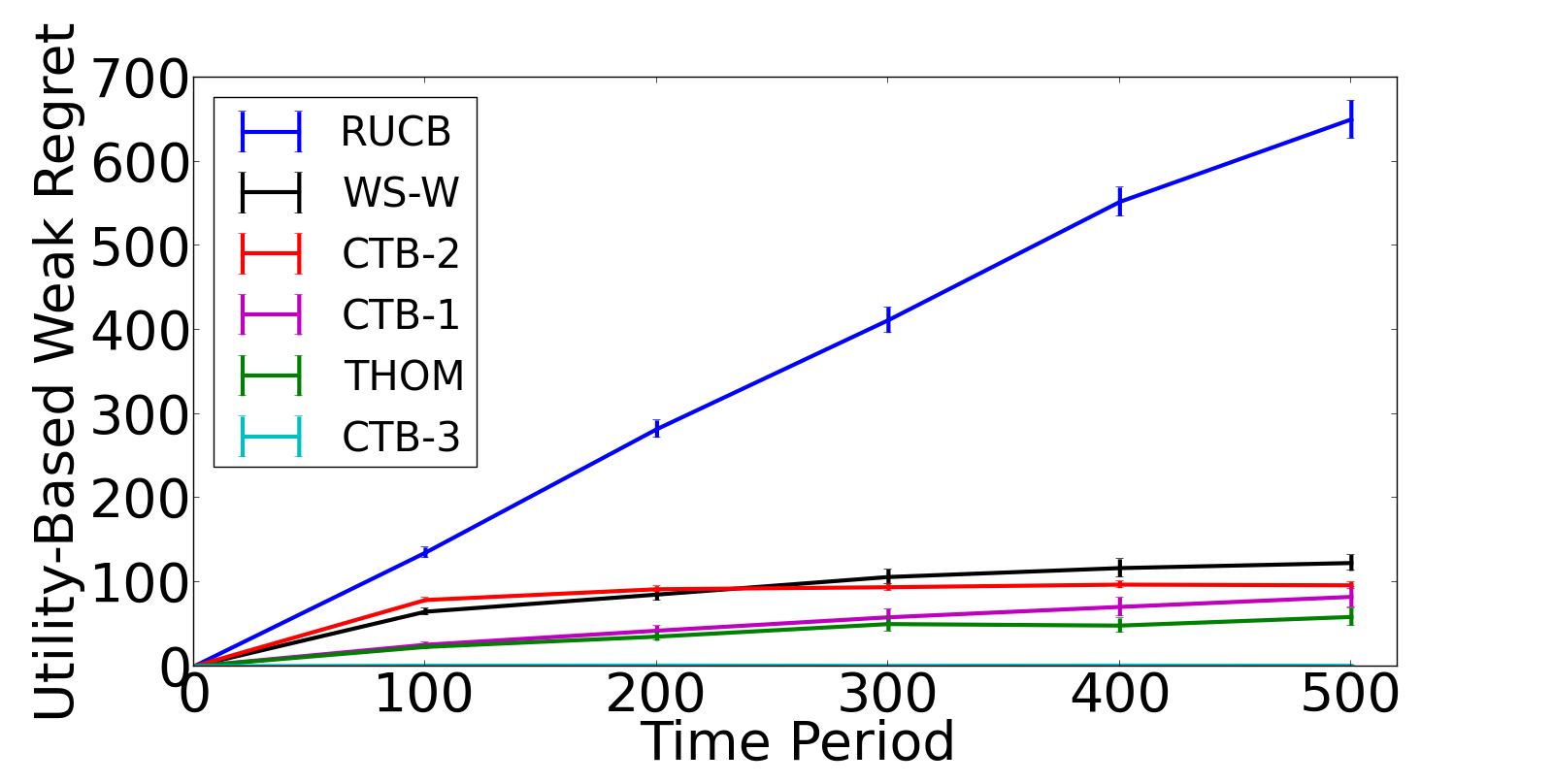}
        \caption{Bradley-Terry Regret and $p_{i,j}$}
    \end{subfigure}
    \caption{Performance comparison of \CTBone, \CTBtwo, \CTBthree, WS, RUCB and Thompson Sampling in the same experimental settings as in section~\ref{sec:exp}, but with plots containing full information for RUCB.}
\label{fig:result3}
\end{figure*}

\vskip 0.2in
\bibliography{sample}

\end{document}